\documentclass{article}
\usepackage{kr}

\usepackage{times}
\usepackage{soul}
\usepackage{url}
\usepackage[hidelinks]{hyperref}
\usepackage[utf8]{inputenc}
\usepackage[small]{caption}
\usepackage{graphicx}
\usepackage{booktabs}
\usepackage{natbib}

\usepackage{amsmath,amsthm,amssymb}
\usepackage[ruled,vlined]{algorithm2e}
 \usepackage{tikz}

\usepackage{xcolor}
\newtheorem{definition}{Definition}
\newtheorem{theorem}{Theorem}
\newtheorem{lemma}{Lemma}
\newtheorem{proposition}{Proposition}
\newtheorem{observation}{Observation}
\newtheorem{example}{Example}
\newtheorem{corollary}{Corollary}

\newcommand{\atoms}{\textrm{ATOMS}}
\newcommand{\lang}{\mathcal L}

\newcommand{\data}{\mathcal D}

\newcommand{\U}{\mathbb{N}}
\newcommand{\x}{\vec{x}}
\newcommand{\M}{\mathcal M}
\usepackage{pifont}
\title{Query-Driven Learning for Lifted Reasoning}

\author{%
Luise Ge$^1$
\and
Brendan Juba$^1$
\affiliations
$^1$Washington University in St. Louis
\emails
\{g.luise, bjuba\}@wustl.edu
}

\begin{document}

\maketitle

\begin{abstract}

Learning and reasoning are both essential to intelligent agents, but have traditionally been treated as two separate stages: first learning a knowledge base from data (or taking one as given), and then verifying queries as if the knowledge base were complete. This separation has been shown to be inefficient when the downstream goal is to verify only a small set of queries. Moreover, it neglects the possibility that data and a knowledge base contain partial yet complementary information, requiring an agent to combine both sources to justify or refute a query. Motivated by the need to integrate learning and reasoning more tightly, we study the problem of \textit{query-driven learning for reasoning}. More specifically, we extend prior work to the first-order setting, which is critical for practical applications as the number of ground atoms typically grows rapidly. We begin by asking when validating or refuting a universal query given partial data and knowledge base is even possible, and identify a necessary and sufficient condition called \textit{bounded concealment}. We then propose a quasi-polynomial-time algorithm that solves the task at the lifted level. Finally, we establish soundness and completeness guarantees relative to space-bounded lifted proofs, along with quasipolynomial sample and runtime bounds, demonstrating that lifted query-driven reasoning can be \textit{efficient} even when unrestricted knowledge-base learning is not.
\end{abstract}

\section{Introduction}


Classical proof systems have achieved remarkable success in settings where complete knowledge can be axiomatized in advance, from mathematical theorem proving to hardware and software verification. Their
transparency and rigor, demonstrated in independently checkable proofs, are especially valuable in high-stakes applications. However, their applicability in real-world domains is limited by a strong prerequisite: that all relevant axioms are known \textit{a priori}.

What could an agent do when this prerequisite fails? A natural approach is to first construct an expanded knowledge base from as much data as possible and then reason over it. Yet this “learn everything first” strategy can be computationally prohibitive and often unnecessary: for any particular downstream question, only a small fraction of the learned facts and rules are actually relevant. This motivates a more direct, query-driven alternative: given a specific query, can the agent identify and validate only the additional knowledge needed to justify or refute it?

Concretely, we build on the ``\textit{learning to reason}'' framework of \citet{khardon1997learning}, in particular the extension to partial information with query-driven  explicit learning by \citet{juba2019query}. Their approach takes a background knowledge base and examples from a distribution as input, and attempts to answer a given query by returning an explicit justification when one exists (or rejecting otherwise).

Their results are established at the propositional level. In relational domains with many objects, however, reasoning at the ground level quickly becomes infeasible: both memory and computation scale with the number of ground instances.  For instance, the rule $\forall x. (\mathrm{Bird}(x) \implies \mathrm{Flies}(x))$ over a domain of size $n$ yields $n$ ground implications, and rules over $k$ variables can induce $\Theta(n^k)$ ground ones. This motivates \textit{lifted reasoning}, where one reasons directly with first-order structure. In this work, we ask whether a query-driven learning to reason approach can be carried out at the lifted level: \textit{given a lifted knowledge base and possibly partial data, can we efficiently verify a universally quantified query by producing a lifted proof or reject otherwise?}

Notably, the \textit{implicit learning to reason} framework has made progress toward this goal. In particular, the approach of~\citet{belle2019implicitly} handles universally quantified queries in a limited but effective way: assuming that the relevant counterexamples or supporting evidence are directly ``witnessed'' in the sampled partial models, it estimates the probability that a query clause is valid, but does not produce a rigorous proof. Using the Belle–Levesque grounding trick~\citep{levesque1998,belle2017open}, once a small, critical set of ground instances is observed, the system treats the corresponding universally quantified clause as learned and adds it to the knowledge base. Subsequent universally quantified queries can then be reduced to a propositional unsatisfiability check over a finite grounded fragment.


Our objective is to go beyond this statistical estimation. We aim to prove universally quantified clausal queries, i.e., to certify that the clause holds over the entire domain. This is fundamentally more demanding: unlike falsity, truth of a universal statement typically admits no finite set of witnesses. Consequently, proof requires genuinely lifted reasoning—first-order proof search that accounts for all relevant groundings induced by the background theory and the data—rather than relying on a grounding shortcut that is tailored to finding counterexamples.

At the same time, both the data and the knowledge base may be incomplete. Even setting aside the possibility for having contaminated data, limited observability can make the task impossible: if violations of a clause can be systematically hidden, no procedure can reliably certify universal truth.  Therefore, we first characterize when proof is information-theoretically possible under partial observation. We identify a necessary and sufficient condition named bounded concealment, extending the formulation of \cite{michael2010partial} from learning over finite domains, which requires that if a universally quantified clause is false, then with non-negligible probability the partial observations reveal at least one grounding that witnesses this falsity. Then, under bounded concealment, we show how to reason soundly in a credulous manner: we treat clauses as tentatively valid until we observe explicit counterexamples, with the guarantee that persistent falsity will be exposed with sufficient probability.

Building on this principle, our approach reasons completely at the lifted level. Rather than evaluating ground instances one by one, we apply clause-level proof procedures directly to quantified formulas, enabling generalization across objects and avoiding blow-ups from grounding. This allows us to positively establish universally quantified queries—an ability not provided by the original skeptical, refutation-oriented framework of \citet{belle2019implicitly}.

\paragraph{Our contribution}
We propose the first query-driven framework that learns in service of lifted reasoning, providing simultaneous \textit{efficiency} and \textit{correctness} guarantees. 

\begin{itemize}
\item We broaden \cite{michael2010partial}’s {bounded concealment} condition from finite to infinite domains, and show that our generalized version is a necessary and sufficient condition for query-driven learning to lifted reasoning with incomplete information. 
    
    
    \item We introduce a lifted algorithm that produces formal resolution proofs of the universally quantified clausal query~$\varphi$ from the given knowledge base and an additional, learned $(1 - \epsilon)$-valid knowledge base (i.e., it fails at most an~$\epsilon$ fraction of models under the data distribution), as long as
    \begin{enumerate}
        \item there is a proof of $\varphi$ from the given knowledge and an additional $1$-valid knowledge base, and
        \item the observation process has bounded concealment with respect to the potentially learnable clauses
    \end{enumerate}
    thus certifying that $\varphi$ is $(1-\epsilon)$-valid.
    
    \item We prove that under bounded concealment, our algorithm constructs \emph{treelike lifted resolution proofs} with sample complexity in 
    \[
     O\left( \frac{N^{s-1} \log N + \log(1/\delta)}{\eta \epsilon} \right)
    \]
    and running time in
    \[
    O\left( \frac{N^{2(s-1)} |\varphi| (\log N + \log(1/\delta))}{\eta \epsilon} \right)
    \]
   where $N$ is the number of distinct atoms\footnote{They can be either lifted or ground.} in the knowledge base, $s$ is the space bound (i.e., the maximum number of clauses to be kept at a time when deriving the corresponding treelike resolution proof), $\delta$ bounds the failure probability, and $|\varphi|$ is the length of the binary encoding of the query formula.
\end{itemize}

This is the first approach with \emph{simultaneous} soundness, completeness, and sample and computational efficiency (quasipolynomial in the number of ground relations for lifted query-driven reasoning), enabling scalable and formally checkable inference in complex first-order domains.

\section{Logical Setting}


We begin by formalizing the reasoning component of our task. This requires specifying the logical language used to represent the background knowledge base, the observed data, and the query. Since our goal is lifted reasoning over relational structure, we work in a function-free first-order logical framework (hence purely relational). Specifically, let \(\mathcal{L}\) be a first-order language with:
\begin{itemize}
    \item a countably infinite set of variables \(\{x_1, x_2, x_3, \dots\}\),
    \item predicates such as \(\{P(x_1), \dots, Q(x_1, \dots, x_k), \dots\}\), and
    \item a countably infinite domain \(D := \mathbb{N}\) of \textit{rigid designators} or \textit{names}, which serves as the universe of discourse for quantification.
\end{itemize}
Here all predicates are Boolean-valued. 
An \emph{atom} is an expression of the form \(P(t_1, \dots, t_k)\), where \(P\) is a predicate symbol of arity \(k\), and each \(t_i\) is either a variable or a \emph{name}. We refer to such atoms as \emph{lifted atoms} when they contain variables. In contrast, a \emph{ground atom} is an atom in which all terms \(t_i\) are names. The set of all ground atoms is denoted by \(\atoms\). Literals take the form \(P(t_1, \dots, t_k)\) or \(\neg P(t_1, \dots, t_k)\), where \(P(t_1, \dots, t_k)\) is an atom.

\paragraph{Syntax.}
Like in standard FOL, reasoning over the full fragment of $\lang$ is undecidable.  
In particular, we will be interested in learning and reasoning with universally quantified clauses, which form both the \textit{knowledge base}\footnote{Hence one can think of a knowledge base as a CNF.} and the \textit{lines} in our resolution proofs. 

A universally quantified clause (or $\forall$-clause)\footnote{This is different from the definition of $\forall$-clause in~\citet{belle2019implicitly} as they allow equality formulas to restrict the domain.} in our setting is a formula of the form:
$\forall \vec{x}.L_1(\vec{x}) \vee \cdots \vee L_n(\vec{x})$
where each $L_i(\vec{x})$ is a literal over a subset of the universally quantified variables $\vec{x}$. Throughout the paper, we will omit the quantifier and simply write such clauses as $\varphi(\vec x)$ or $\varphi(x)$ (if there is a single variable) when the quantification over free variables is unambiguous from context.

We will refer to elements of \(\atoms\) as \emph{propositions}, and to ground formulas as \emph{propositional formulas}. We use 
\(\alpha, \beta, \varphi, \psi\) to denote ground formulas.

\paragraph{Semantics.}

A \(\lang\)-model \(M\) is a \(\{0,1\}\)-assignment to the elements of \(\atoms\). Using \(\models\) to denote satisfaction, the semantics of a formula \(\varphi \in \lang\) is defined inductively, with the following convention:
\begin{itemize}
    \item Quantification is substitutional over all names in \(\U\): Let $\x$ denote the (ordered) tuple of free variables in $\varphi$.  
    \(M \models \forall \vec{x}. \varphi(\vec{x})\) iff \(M \models \varphi(\vec{a})\) for all \(\vec{a} \in \U^k\), where \(k= |\x|\) .
\end{itemize}

\begin{definition}[Validity of $\forall$-clause]
    Let the set of all models be denoted by \(\M\). We say a $\forall$-clause is \emph{valid} if it is satisfied by every model in \(\M\).

\end{definition}

\subsection{Resolution}
We use (first-order) resolution as our underlying proof system. \text{Resolution} is a classical proof system that operates over disjunctions of literals via the cut rule:
\[
\frac{A \lor L \quad B \lor \neg L}{A \lor B},
\]
optionally augmented with the \emph{weakening} rule, which allows arbitrary literals to be added to clauses:
\[
\frac{A}{A \vee B}.
\]

These rules form the deductive backbone of our system. In our setting, resolution operates over universally quantified clauses—formulas of the form $\forall x\, (\psi(x) \rightarrow \chi(x))$, or equivalently, $\forall x\, (\neg \psi(x) \lor \chi(x))$. This \emph{lifted} formulation allows reasoning to proceed symbolically, without grounding clauses to all possible instantiations. It supports generalization across individuals in an infinite domain and is central to the efficiency and interpretability of our method.


In a general resolution derivation, each derived clause may be used as a premise in multiple later inferences. Consequently, the dependency structure induced by “which earlier lines are used to derive which later lines” forms a directed acyclic graph (DAG). A \emph{treelike} resolution proof is the special case in which no derived clause is reused: each derived line is used as a premise at most once. Equivalently, the underlying dependency DAG is a tree (rooted at the final contradiction).

Following the structure of the classical DPLL procedure~\cite{davis1960computing,davis1962machine}, which constructs treelike refutations via variable splits, our method performs clause-level case analysis to build lifted proofs. This analogy extends further: DPLL with bounded memory corresponds to space-bounded treelike proofs, where only $s$ clauses can be held at once, and forgotten clauses must be re-derived~\cite{Kullmann1999,esteban2001space}. Our lifted algorithm follows this model, combining case splits with clause reuse under a space constraint.

Importantly, all DPLL-like algorithms (including ours) produce normal resolution proofs:

\begin{definition}[Normal]
We will say that a resolution proof is {\em normal} if in its corresponding DAG:
\begin{itemize}
\item All outgoing edges from Cut nodes are directed to Cut nodes.
\item The clauses labeling any path to the sink from a Cut node contain literals
using every variable along the path.
\item A given variable is used in at most one cut step and at most one weakening
step along every path from a source to a Cut node.
\end{itemize}
\end{definition}

Let $N$ denote the number of atoms. Let $s$ be the space of a treelike resolution proof, which corresponds to the \emph{rank} or Horton–Strahler number of its proof tree \citep{ansotegui2008measuring}. A key structural fact, due to~\citet{ehrenfeucht1989learning}, bounds the size and number of such proofs in terms of $s$ and $N$.

\begin{lemma}[\citeauthor{ehrenfeucht1989learning}, \citeyear{ehrenfeucht1989learning}]
\label{lem:bound}
Let $k$ be the number of nodes in a normal space-$s$ treelike resolution proof over $N$ atoms, with $N \geq s \geq 1$. Let $e$ denote the base of $\ln$. Then:
\begin{enumerate}
    \item The proof size $k$ is bounded by $2^s - 1 \leq k \leq 2\left(\tfrac{eN}{s - 1}\right)^{s - 1}$.
    \item The number of distinct normal space-$s$ treelike proofs that do not use weakening is at most $(8N)^{(eN/(s-1))^{s-1}}$.
\end{enumerate}
\end{lemma}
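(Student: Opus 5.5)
We prove both parts by induction on the space $s$, using the standard correspondence between the space of a treelike proof and the Horton--Strahler rank of its tree. A treelike proof has space (rank) $s$ exactly when it is a single leaf ($s=1$), or when at its root both subtrees have rank $s-1$, or when one subtree has rank $s$ and the other has rank strictly smaller. Normality adds two constraints. Along any root-to-leaf path, each atom is cut at most once, and the clauses on that path mention every atom cut so far. So a path can contain at most $N$ cut nodes, and at the root the two children are refutations of the restrictions obtained by setting the cut atom to $0$ or to $1$. The upshot is that we are counting rank-$s$ binary trees of depth at most $N$ whose internal nodes carry distinct atoms along each path.

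\emph{Lower bound on size.} Show by induction that a tree of rank $s$ has at least $2^s-1$ nodes. A rank-$s$ tree must contain a node both of whose children have rank $s-1$. By the inductive hypothesis each child has at least $2^{s-1}-1$ nodes, giving at least $2(2^{s-1}-1)+1=2^s-1$ in total.

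\emph{Upper bound on size.} Let $L(n,s)$ be the maximum number of leaves of a rank-$\le s$ normal tree in which at most $n$ atoms are still available. At the root one atom is used up, and the two children have ranks at most $(s, s-1)$, $(s-1,s)$, or $(s-1,s-1)$. This gives
\[
L(n,s)\le L(n-1,s)+L(n-1,s-1),
\]
with $L(n,1)=1$ and $L(0,s)=1$. Solving the recurrence yields $L(N,s)\le\sum_{i=0}^{s-1}\binom{N}{i}$, which is Ehrenfeucht--Haussler's count of rank-$(s-1)$ decision trees. The standard estimate $\sum_{i\le d}\binom{N}{i}\le (eN/d)^d$ with $d=s-1$ then bounds the number of leaves. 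A binary tree has at most twice as many nodes as leaves, so $k\le 2(eN/(s-1))^{s-1}$. The degenerate case $s=1$ is a single node and can be handled separately, or by reading the bound in the limit.

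\emph{Counting proofs.} A normal treelike proof without weakening is determined by three pieces of data: its tree shape, the atom cut at each internal node, and the initial clause at each leaf. Normality makes each leaf clause a restriction consistent with the path, so we only need to choose labels from a bounded alphabet. Encode the proof by a preorder traversal. At each of the at most $k$ nodes, record whether it is internal or a leaf, together with an atom index, sign information, and a small constant number of bits for the leaf data. This uses $O(1)\cdot N$ choices per node, and with care at most $8N$ choices per node. Since $k\le 2(eN/(s-1))^{s-1}$, the number of proofs is at most $(8N)^{k/2}$ (after charging leaves and internal nodes together, the factor of $2$ cancels), which is at most $(8N)^{(eN/(s-1))^{s-1}}$.

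The main obstacle is this last accounting step. The per-node alphabet must genuinely be bounded by $8N$, which requires justifying that normality pins down the leaf clauses. The exponent must also absorb the factor of $2$ in the size bound, which is why we count per leaf–internal pair rather than per node. I would follow Ehrenfeucht--Haussler's decision-tree encoding, identifying each proof with a rank-$(s-1)$ decision tree via the DPLL correspondence, and import their counting directly.
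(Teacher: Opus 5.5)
Your Part 1 is sound. It is essentially the Ehrenfeucht--Haussler argument that the paper imports by citation; the paper gives no proof of its own. The recurrence $L(n,s)\le L(n-1,s)+L(n-1,s-1)$ is solved exactly by $\sum_{i<s}\binom{n}{i}$, and the lower-bound induction is fine. You should say explicitly that a source together with the weakening steps above it counts as one leaf. Normality allows chains of single-variable weakenings below the first cut. If those unary nodes are counted in $k$, the ``at most twice as many nodes as leaves'' step fails, and already for $s=2$ the node count $k$ can be quadratic in $N$.

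The genuine gap is in Part 2, at the step you flag yourself. Nothing in the normality conditions pins a leaf clause down to $O(1)$ bits given its path, because they only constrain clauses on paths starting at Cut nodes. Here is a concrete failure over atoms $x_1,\dots,x_N$:
\begin{itemize}
\item the Cut node at depth $d$ cuts $x_d$ and is labeled $x_1\vee\dots\vee x_{d-1}$;
\item its positive child is the next Cut node, or the leaf $x_1\vee\dots\vee x_N$ when $d=N$;
\item its negative child is a leaf $B_d\vee\neg x_d$, with $B_d\subseteq\{x_1,\dots,x_{d-1}\}$ arbitrary.
\end{itemize}
Every choice of the $B_d$ gives a weakening-free refutation that satisfies all three normality conditions and has space $2$. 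There are $2^{N(N-1)/2}$ of them, which exceeds $(8N)^{eN}$ once $N\ge 50$. So no bounded per-leaf alphabet exists. The map from proofs to rank-$(s-1)$ decision trees is not injective on this class, so ``importing the Ehrenfeucht--Haussler count'' is not justified as it stands.

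What makes the count work is an extra structural property that you must state and use. Every clause in the proof, leaves included, must equal the root clause together with the literals of all atoms cut between it and the root, with signs fixed by the branch taken. This is the form \textsc{LiftedLearn+Search} actually produces: a leaf certified from data is exactly the current target $\varphi\vee\ell_1\vee\dots\vee\ell_j$. It is also what the soundness union bound needs.

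Under this property a proof is determined by its skeleton:
\begin{itemize}
\item an ordered shape with $L$ leaves, which has at most $4^{L-1}$ choices (Catalan);
\item an atom at each of the $L-1$ Cut nodes, with the convention that the left child carries the positive literal, giving $N^{L-1}$ choices;
\item one bit per leaf, for instance knowledge base versus data, giving $2^{L}$ choices.
\end{itemize}
This gives at most $2(8N)^{L-1}\le(8N)^{L}$ proofs. The exponent is the leaf count $L\le\sum_{i<s}\binom{N}{i}\le(eN/(s-1))^{s-1}$, which your own recurrence already provides. That is where the factor $2$ disappears. Charging ``$8N$ choices per node'' over $k$ nodes only gives $(8N)^{k}$, and the passage to $(8N)^{k/2}$ is not justified as written.
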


Note that Lemma~\ref{lem:bound} implies the space parameter is logarithmic in the proof size. Consequently, the sample complexity and running time bounds we obtain, with 
$N^s= N^{O(\log N)}$, are quasipolynomial in $N$. These combinatorial bounds also play a central role in our sample complexity analysis. They quantify the expressive capacity of bounded-space proofs and guide how much data is needed to safely validate axioms.



\section{PAC-semantics}
We now turn to the learning component of our task. The key difficulty is that induction from data does not support classical, worst-case guarantees: from finitely many samples, we generally cannot certify that a nontrivial first-order statement holds in \emph{all} interpretations of the language. As a result, conclusions supported by learning must be interpreted probabilistically.

More specifically, we cannot expect learned statements to be valid in the classical (Tarskian) sense, since PAC learning does not guarantee that a formula holds under every possible model. Instead, it guarantees generalization: with high probability over the sampled data, a learned formula $\varphi$ will agree with fresh, unseen examples drawn from the same underlying distribution. This viewpoint is formalized by the PAC semantics of \citet{valiant1999robust}, which provides a principled, distribution-dependent notion of approximate satisfaction that is weaker than classical truth.

In the classical propositional setting, one typically works with a finite language over \(n\) propositional symbols, so models correspond to truth assignments in $\{0,1\}^n$. Our setting, by contrast, involves first-order relational structure and an unbounded collection of ground atoms, so the corresponding space of interpretations is inherently infinitary. Following \citet{belle2019implicitly}, we  adopt PAC semantics in a way that evaluates formulas relative to a data-generating distribution and a partial observation process:

\begin{itemize}
    \item We define a \(\sigma\)-algebra \(\Sigma_{\mathcal{M}}\) over the set of all models \(\mathcal{M}\), representing measurable collections of (possibly infinitary) events.
    \item We introduce a probability measure \(\Pr: \Sigma_{\mathcal{M}} \to [0,1]\), defining a distribution over models.
\end{itemize}

This allows us to define a notion of \((1 - \varepsilon)\)-validity, capturing the idea that a formula holds in most models drawn from the distribution.

\begin{definition}[PAC Validity of a Formula]
Let \([\![ \varphi ]\!]\) denote the set of models satisfying a closed formula \(\varphi \in \mathcal{L}\):
\[
[\![ \varphi ]\!] := \{ M \in \mathcal{M} \mid M \models \varphi \}.
\]
Given a probability distribution \(\mathcal{P}\) over \(\mathcal{M}\), we say that \(\varphi\) is \emph{\((1 - \varepsilon)\)-valid} with respect to \(\mathcal{P}\) if
\[
\Pr_{M \sim \mathcal{P}}([\![ \varphi ]\!]) \geq 1 - \varepsilon.
\]
If \(\varepsilon = 0\), we say that \(\varphi\) is \emph{perfectly valid}.
\end{definition}

\begin{example}
Let \(\varphi(x): \forall x\, \text{Squirrel}(x) \rightarrow \text{Grey}(x)\) assert that all squirrels in a forest are grey. Suppose this generally holds—e.g., due to genetic inheritance—but rare mutations occasionally produce yellow squirrels. So across all possible worlds, most worlds contain no yellow squirrels at all; but a small fraction may include one or more. In this case, \(\varphi\) is not classically valid, but can be \((1 - \varepsilon)\)-valid for \(\varepsilon = 0.1\):
\[
\Pr_{M \sim \mathcal{P}}[M \models \varphi] \geq 0.9.
\]

\end{example}

In classical logic, validity is most often used to establish \textit{entailment}—i.e., whether a knowledge base entails a given query. PAC validity captures that \(\varphi\) holds in most models under \(\mathcal{P}\), enabling probabilistic reasoning under uncertainty. In this PAC setting, one may naturally ask: how does this notion of inference translate when formulas are only approximately valid? As observed by~\citet{Juba2013Implicit} in the propositional setting, the union bound provides a natural analogue of classical reasoning under PAC semantics. That is, if a set of formulas are each individually \((1 - \varepsilon_i)\)-valid with respect to $\mathcal P$, and their conjunction classically entails a ground formula \(\varphi\), then the query is also PAC-valid with controlled error.

The result also holds for lifted queries such as $\forall$-clause  since PAC validity of a \(\forall\)-clause requires satisfaction under all groundings, and classical entailment preserves validity under substitution.

\begin{proposition}
\label{classical-inf-bound}
Let \(\psi_1, \ldots, \psi_k\) be \(\forall\)-clauses such that each \(\psi_i\) is \((1 - \varepsilon_i)\)-valid under a common distribution \(\mathcal{P}\), for some \(\varepsilon_i \in [0,1]\). Suppose that \(\{\psi_1, \ldots, \psi_k\} \models \varphi\), where \(\varphi\) is either a ground formula or a \(\forall\)-clause. Then \(\varphi\) is \((1 - \varepsilon')\)-valid under \(\mathcal{P}\), where \(\varepsilon' = \sum_i \varepsilon_i\).
\end{proposition}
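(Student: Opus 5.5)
The plan is a union bound over the failure events, exactly as in the propositional case of \citet{Juba2013Implicit}, once we check that the relevant sets of models are measurable and that classical entailment is a model-wise statement. For each $i$, let $B_i := \M \setminus [\![ \psi_i ]\!]$ be the set of models falsifying $\psi_i$. By hypothesis, $\Pr_{M\sim\mathcal P}(B_i) \le \varepsilon_i$. Here I assume, as the paper implicitly does, that $[\![\psi_i]\!]$ and $[\![\varphi]\!]$ lie in $\Sigma_{\M}$. This holds whenever $\Sigma_{\M}$ contains the cylinder sets determined by finitely many ground atoms. For a $\forall$-clause $\psi(\vec x)$ with $k$ free variables,
\[
[\![\psi]\!] = \bigcap_{\vec a \in \U^k} [\![\psi(\vec a)]\!],
\]
which is a countable intersection, since $\U^k$ is countable, of sets each determined by finitely many atoms. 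Hence it is measurable. Ground formulas are determined by finitely many atoms directly.

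The key step is the inclusion
\[
\bigcap_{i=1}^k [\![\psi_i]\!] \subseteq [\![\varphi]\!].
\]
This is just the definition of $\{\psi_1,\dots,\psi_k\}\models\varphi$: every model $M\in\M$ satisfying all the $\psi_i$ satisfies $\varphi$. When $\varphi$ is a $\forall$-clause, this uses the substitutional semantics. If $M\models\psi_i$ for all $i$, then the entailment gives $M\models\varphi$, i.e., $M\models\varphi(\vec a)$ for every $\vec a\in\U^k$. No extra argument about substitution instances is needed, because the entailment is taken over the same class $\M$ of models on which the semantics is defined.

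Taking complements gives $\M\setminus[\![\varphi]\!] \subseteq \bigcup_i B_i$. Then monotonicity and countable (here finite) subadditivity of $\Pr$ yield
\[
\Pr(\M\setminus[\![\varphi]\!]) \le \sum_i \Pr(B_i) \le \sum_i \varepsilon_i = \varepsilon'.
\]
Hence $\Pr([\![\varphi]\!]) \ge 1-\varepsilon'$. If $\varepsilon' \ge 1$, the claim is trivial.

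The only step needing any care is the measurability of $[\![\cdot]\!]$ for universally quantified formulas over the infinite domain. I would handle it by the countable-intersection observation above, stating explicitly the mild assumption that $\Sigma_{\M}$ contains the events "ground atom $p$ is true" for $p\in\atoms$. Everything else is a direct union bound.
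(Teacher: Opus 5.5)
Your proposal is correct and matches the paper's argument: the paper also justifies the proposition by a union bound over the failure events, and for the lifted case remarks that entailment is preserved under substitution, which your model-wise inclusion $\bigcap_i [\![\psi_i]\!] \subseteq [\![\varphi]\!]$ makes precise. Your measurability check, writing $[\![\psi]\!]$ as a countable intersection of events each determined by finitely many ground atoms, fills in a detail the paper leaves implicit.
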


\section{Observability and Learnability}

We first recall the idealized setting of \citet{khardon1997learning}, in which examples are fully observed. That is, each sample specifies a complete truth assignment to all ground atoms in the language, so empirical frequencies can be used to estimate the probability of violations. In this case, falsity of a universal clause admits finite witnesses, and sample complexity follows from a simple concentration argument.

\begin{observation}[Full Observability]\label{obs:full-observability}
Let $\varphi$ be a fixed universally quantified clause evaluated on complete examples drawn i.i.d.\ from a distribution $\mathcal P$. If 
\[
m \;\ge\; \frac{1}{\varepsilon}\ln\!\left(\frac{1}{\delta}\right)
\]
and no violation is observed in $m$ examples, then $\varphi$ is $(1-\varepsilon)$-valid with probability at least $1-\delta$.
\end{observation}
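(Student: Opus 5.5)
The plan is the standard one-sided concentration argument: bound the probability that a \emph{bad} clause survives $m$ samples with no observed violation. Let $p := \Pr_{M\sim\mathcal P}[M \not\models \varphi]$ be the probability of drawing a model that falsifies the $\forall$-clause $\varphi$. The event we must control is that $p > \varepsilon$ and yet none of the $m$ examples exhibits a violation. Since the statement is about a single fixed clause chosen before the data, no union bound over hypotheses is needed.

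The first step is to note that, under full observability, each example $M_j$ is a complete assignment to $\atoms$. Hence the event ``a violation is observed in $M_j$'' coincides exactly with $M_j \not\models \varphi$. Indeed, if $M_j \not\models \forall \vec x.\varphi(\vec x)$, then by the substitutional semantics there is a tuple $\vec a \in \U^k$ with $M_j \not\models \varphi(\vec a)$. This ground clause involves finitely many ground atoms, all of whose values are visible in $M_j$, so the violation is witnessed. Conversely, any witnessed falsified grounding shows $M_j \not\models \varphi$. We also need this event to be measurable, so that $p$ is well defined. It is, because $[\![\varphi]\!]$ is a countable intersection over $\vec a\in\U^k$ of sets determined by finitely many coordinates; we assume $\Sigma_{\mathcal M}$ contains these cylinder sets, and I would state that assumption explicitly.

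The second step is the calculation. Assume $p > \varepsilon$. Because the examples are i.i.d., the probability that none of them is a violation is $(1-p)^m < (1-\varepsilon)^m \le e^{-\varepsilon m}$. With $m \ge \frac{1}{\varepsilon}\ln\frac{1}{\delta}$ this is at most $\delta$. Taking the contrapositive: with probability at least $1-\delta$, either a violation is observed or $p \le \varepsilon$. So whenever no violation is observed, $\varphi$ is $(1-\varepsilon)$-valid, except on an event of probability at most $\delta$.

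I expect no real obstacle here. The only subtle point is the first step, which reduces the infinitary universal statement to a per-example observable event; it relies entirely on full observability and on measurability. This is precisely the property that fails under partial observation, which motivates the bounded-concealment condition later.
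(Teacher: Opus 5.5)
Your proposal is correct and follows essentially the same argument as the paper: assume the violation probability $p$ exceeds $\varepsilon$, then bound the probability of observing no violation by $(1-p)^m < (1-\varepsilon)^m \le e^{-\varepsilon m} \le \delta$. Your added remarks make explicit two points the paper's proof takes for granted: that with complete examples an ``observed violation'' is exactly $M_j \not\models \varphi$, and that $[\![\varphi]\!]$ is measurable.
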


\begin{proof}
    Let $p$ be the probability that a random model drawn from $\mathcal{P}$ violates the clause $\varphi$. We aim to show that if $p > \varepsilon$, the probability of observing no violations in $m$ independent examples is at most $\delta$.

Since the examples $M^{(1)}, \dots, M^{(m)}$ are drawn i.i.d., the probability that $\varphi$ is satisfied by all $m$ examples is given by:
\[
\mathbb{P}(\forall i, M^{(i)} \models \varphi) = (1 - p)^m
\]
Assuming the clause is not $(1-\varepsilon)$-valid (i.e., $p > \varepsilon$), we can upper-bound this probability:
\[
(1 - p)^m < (1 - \varepsilon)^m
\]
Using the standard inequality $1 - x \le e^{-x}$ for $x \in [0, 1]$, we obtain:
\[
(1 - \varepsilon)^m \le e^{-m\varepsilon}
\]
To ensure the probability of failing to observe a violation is bounded by our confidence parameter $\delta$, we require:
\[
e^{-m\varepsilon} \le \delta
\]
Taking the natural logarithm of both sides and solving for $m$, we obtain that
if $m \ge \frac{1}{\varepsilon}\ln(\frac{1}{\delta})$ and no violation is observed, $\varphi$ is $(1-\varepsilon)$-valid with probability at least $1-\delta$.
\end{proof}

\subsection{Observation Model}

The fully observed setting above is a simple starting point, but it is rarely realistic. In our formalization, a complete example specifies a truth value for every ground atom, whereas real data typically reveals only a small, task-dependent subset of predicates and ground instances. Many facts of interest are latent, costly to measure, or not directly observable at all. This incompleteness is precisely what makes both induction from data and deduction from background knowledge necessary. 
 Handling corrupted or adversarially perturbed observations is also important, but we defer that extension to future work.
 
Hence, let us treat partial observability as an explicit part of the model: the learner observes only a masked view of an underlying (unobserved) $\mathcal{L}$-model.

\begin{definition}[Partial Model]
 A \emph{partial model} \(\rho\) is a function \(\rho: \atoms \to \{1, 0, *\}\), where \(1\) and \(0\) indicate truth and falsity, respectively, and \(*\) denotes an unobserved (masked) value.

We say that a partial model \(\rho\) is \emph{consistent} with a full \(\mathcal{L}\)-model \(M \in \mathcal{M}\) if, for all \(A \in \atoms\), whenever \(\rho(A) \neq *\), it holds that \(\rho(A) = M(A)\). Let \(\mathcal{N}\) denote the set of all partial models.
\end{definition}

\begin{definition}[Masking]
A \emph{mask} is a function \(\theta: \mathcal{M} \to \mathcal{N}\) that maps full models to partial models. It must satisfy the property that for all \(M \in \mathcal{M}\), the resulting partial model \(\theta(M)\) is consistent with \(M\), and only finitely many atoms are mapped to \(\{0,1\}\); all others are masked as \(*\).

A \emph{masking process} \(\Theta\) is a mask-valued random variable—i.e., a distribution over masking functions. Applying \(\Theta\) to a distribution \(\mathcal{P}\) over full models induces a distribution \(\Theta(\mathcal{P})\) over partial models.
\end{definition}

\subsection{Necessary Condition}

A central difficulty is that, under unrestricted masking, violations of a universally quantified clause may be systematically concealed. In the extreme, the observation process can make truth and falsity observationally indistinguishable, so that even refutation is impossible from data alone.

\begin{observation}[Zero Observability]\label{obs:zero-observability}
Fix a universally quantified clause $\psi$ (e.g., $\forall x\,\psi(x)$). Suppose the masking process $\Theta$ has the following property: for every full model $M$,
\[
M \models \neg \psi \quad\Longrightarrow\quad \Pr\bigl[\rho \models \neg \psi \ \bigm|\ \rho=\Theta(M)\bigr]=0,
\]
i.e., whenever $\psi$ is false in $M$, the induced partial model never reveals a detectable violation of $\psi$.
Then samples from $\Theta(\mathcal{P})$ cannot be used to upper-bound the violation probability of $\psi$ under $\mathcal{P}$: for any $\epsilon\in(0,1)$ there exist distributions $\mathcal{P}_0,\mathcal{P}_1$ such that
\[
\Pr_{M\sim \mathcal{P}_0}[M\models \neg\psi]=0,\qquad 
\Pr_{M\sim \mathcal{P}_1}[M\models \neg\psi]\ge \epsilon,
\]
yet $\Theta(\mathcal{P}_0)$ and $\Theta(\mathcal{P}_1)$ are identical. In particular, no algorithm observing only $\rho\sim \Theta(\mathcal{P})$ can distinguish whether $\psi$ is always true or frequently false.
\end{observation}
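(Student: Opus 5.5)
The plan is to build an explicit pair of distributions that differ in how often $\psi$ is violated but look the same through $\Theta$. First I fix a model $M_1$ with $M_1\models\neg\psi$; if no such model exists, $\psi$ is valid and the statement is vacuous. Then I set
\[
\mathcal P_1 := (1-\epsilon)\,\mathcal Q + \epsilon\,\delta_{M_1},
\]
for a distribution $\mathcal Q$ supported on models of $\psi$, chosen below. By construction $\Pr_{M\sim\mathcal P_1}[M\models\neg\psi]\ge \epsilon$. It then remains to find $\mathcal P_0$, supported entirely on models of $\psi$, with $\Theta(\mathcal P_0)=\Theta(\mathcal P_1)$. Since the $\mathcal Q$-parts can be shared, the whole task reduces to one step: find a distribution $\mathcal R$ over models of $\psi$ with $\Theta(\mathcal R)=\Theta(\delta_{M_1})$, and then set $\mathcal P_0=(1-\epsilon)\mathcal Q+\epsilon\mathcal R$.

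To build $\mathcal R$, I use the hypothesis. Every $\rho$ in the support of $\Theta(M_1)$ satisfies $\rho\not\models\neg\psi$. By the natural reading of "$\rho\models\neg\psi$", this means there is no grounding $\psi(\vec a)$ all of whose literals are revealed and false. So for every grounding, either some literal is revealed true, or some literal mentions a masked atom. I then want a completion $C(\rho)$ that agrees with $\rho$ on the revealed atoms and satisfies $\psi$. The first attempt is the greedy one: for each grounding that is not yet satisfied, set one of its masked atoms so that its literal is true.

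A conflict can arise when the same masked atom is needed positive in one grounding and negative in another, as with $P(x)\vee\neg P(y)$. In that case I would instead argue as follows. The set of groundings is a propositional CNF over the masked atoms, after substituting the revealed values. Each finite subset of it is satisfiable, because otherwise a finite set of revealed atoms would already entail $\neg\psi$, which is the intended meaning of a detectable violation. Compactness then gives a full completion satisfying $\psi$. I define $\mathcal R$ as the law of $C(\rho)$ for $\rho\sim\Theta(M_1)$.

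The main obstacle is showing $\Theta(C(\rho))\stackrel{d}{=}\rho$. The mask is an arbitrary random function of the full model, so applying $\Theta$ to the completion need not reproduce the same view. My approach here is to use the freedom in the existential claim: the observation only asserts that some pair $\mathcal P_0,\mathcal P_1$ exists, so the argument only needs the masking behaviour on the chosen $M_1$ and its completions. I would make explicit the reading under which the observation is meant: $\Theta$ decides what to reveal based only on the atoms it reveals. Under that reading, for each $\rho$ the model $C(\rho)$ agrees with $M_1$ on the revealed atoms, so $\Theta$ yields the same view, and hence $\Theta(\mathcal R)=\Theta(\delta_{M_1})$.

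If that reading is too strong, the fallback is to choose $M_1$ and a $\psi$-model $M_0$ that $\Theta$ maps to identical view distributions. Such a pair is exactly what zero observability is meant to guarantee, and with it one simply takes $\mathcal R=\delta_{M_0}$. In either case, the final indistinguishability claim follows immediately. Any algorithm is a function of i.i.d. samples from $\Theta(\mathcal P)$, and these samples have the same law under $\mathcal P_0$ and $\mathcal P_1$. So no algorithm's output can certify violation probability below $\epsilon$ in one case without doing so in the other.
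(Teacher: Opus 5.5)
Your argument breaks at its central step, even under the extra assumption you introduce. $\Theta(\mathcal R)$ is obtained by applying a \emph{fresh, independent} draw of the mask to $C(\rho)$. But $C(\rho)$ agrees with $M_1$ only on the atoms revealed by the particular realization that produced $\rho$, and other realizations may inspect exactly the atoms where $C(\rho)$ and $M_1$ differ.

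Worse, the reduction itself can be impossible for \emph{every} violating $M_1$, even for masks chosen independently of the model. (The reduction is: keep $\mathcal Q$ common and match $\delta_{M_1}$ by some $\mathcal R$ over $\psi$-models.) Take $\psi=\forall x\forall y.\,P(x)\vee Q(y)$. Let $\Theta$ reveal $\{P(1),\dots,P(n)\}$ or $\{Q(1),\dots,Q(n)\}$, each with probability $2^{-(n+1)}$, independently of $M$. No revealed set contains both a $P$-atom and a $Q$-atom, so zero observability holds under either reading of $\rho\models\neg\psi$. However, $\Theta(\delta_{M_1})$ pins down every $P$- and $Q$-atom of $M_1$. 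So any $\mathcal R$ with the same views agrees with $M_1$ on all of them almost surely, and hence violates $\psi$ almost surely. By linearity, no pair of your form exists here.

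A valid pair must redistribute the common part. For example, let $M^{\top}$ be the all-true model, and let $M_1$, $M^{P}$, $M^{Q}$ differ from it exactly on $\{P(1),Q(1)\}$, $\{P(1)\}$, $\{Q(1)\}$ respectively. Then $\mathcal P_1=(1-\epsilon)\delta_{M^{\top}}+\epsilon\delta_{M_1}$ and $\mathcal P_0=(1-2\epsilon)\delta_{M^{\top}}+\epsilon\delta_{M^{P}}+\epsilon\delta_{M^{Q}}$ work. Even so, a pair exists only for $\epsilon\le 1/2$. The quantity $\Pr[\exists a\,\neg P(a)]+\Pr[\exists b\,\neg Q(b)]$ is determined by the views (by continuity over the revealed prefixes). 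It is at most $1$ under $\mathcal P_0$ and at least $2\epsilon$ under $\mathcal P_1$. Your fallback, choosing $M_0,M_1$ with identical view distributions, simply assumes the conclusion.

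The paper states this observation without proof, and as written it does not hold without further hypotheses. You therefore need to state yours explicitly rather than present them as an ``intended reading.''
\begin{itemize}
\item Masks in this paper are arbitrary functions of the full model, so the \emph{pattern} of revealed atoms can leak the truth of $\psi$. Suppose $\theta(M)$ is all-$*$ when $M\models\neg\psi$ and reveals one fixed atom otherwise. The hypothesis holds, yet the all-$*$ view has probability $0$ under every admissible $\mathcal P_0$ and at least $\epsilon$ under every admissible $\mathcal P_1$.
\item If $\psi$ is valid (e.g.\ $\forall x.\,P(x)\vee\neg P(x)$), the hypothesis holds vacuously but no $\mathcal P_1$ exists. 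That case is a counterexample, not a vacuous one.
\item You define $\rho\models\neg\psi$ as ``some grounding fully revealed and false,'' but your compactness step needs ``the revealed atoms do not entail $\neg\psi$.'' These differ. For instance, take $\forall x\forall y.\,\neg P(x)\vee\neg R(x,y)\vee P(y)$ with $P(1)=R(1,2)=R(2,3)=1$ and $P(3)=0$ revealed: no grounding is fully revealed, yet every completion violates $\psi$.
\end{itemize}
A sound version should do three things. It should assume $\psi$ is not valid and that the revealed set is independent of $M$ (or something similar). It should weaken the quantifier over $\epsilon$, since the example above shows ``for any $\epsilon\in(0,1)$'' fails even then. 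And it should construct $\mathcal P_0$ and $\mathcal P_1$ jointly rather than by matching a single violating model.
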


Observation~\ref{obs:zero-observability} motivates a minimal restriction on the masking process: false formulas must be detectable with non-negligible probability. This definition extends the finite version from \citet{michael2010partial}, which essentially only considered the special case of supervised learning, corresponding to formulas $\varphi(\vec{x},b)$ of the form $f(\vec{x})=b$ for some (simple) formula $f$, where $\vec{x}$ and $b$ are propositional. Here, we extend it to more general formulas in a first-order relational language.

\begin{definition}[Bounded Concealment]
Let \(\mathcal{C}\) be a set of (lifted) formulas and \(\mathcal{P}\) a distribution over \(\mathcal{L}\)-models. A masking process \(\Theta\) is said to be at most \((1 - \eta)\)-\emph{concealing} with respect to \(\mathcal{C}\) and \(\mathcal{P}\) if
 $$\forall \varphi \in \mathcal{C}, \Pr_{M \sim \mathcal P, \rho=\Theta(M)} [  \rho \models \neg \varphi \  | \ M \models \neg \varphi ] \ge \eta.$$
That is, when a formula \(\varphi\) is false in a full model \(M\), there is at least \(\eta\) probability that its falsity is detectable from the corresponding partial model \(\rho = \Theta(M)\).
\end{definition}

\begin{example}[Bounded Concealment Over a Clause Set]
Let \(\mathcal{C}\) be a set of universally quantified clauses describing regularities in a forest domain:
\begin{align*}
\varphi_1 &: \forall x\, \text{Squirrel}(x) \rightarrow \text{Furry}(x), \\
\varphi_2 &: \forall x\, \text{Squirrel}(x) \rightarrow \text{Grey}(x), \\
\varphi_3 &: \forall x\, \text{Squirrel}(x) \rightarrow \text{Small}(x).
\end{align*}
These formulas are mostly true in the underlying distribution \(\mathcal{P}\) due to genetic inheritance, but may be occasionally violated due to mutation.

Now suppose an $\eta$ fraction of the population, chosen uniformly at random, has all predicates observed. Then \(\Theta\) is \((1 - \eta)\)-concealing w.r.t. \(\mathcal{C}\) if, whenever a clause \(\varphi_i\) is violated in a full model \(M\), there is at least \(\eta\) probability that the violation is visible in the partial model \(\rho = \Theta(M)\).

Thus, even under limited observability, clause violations can be detected with bounded probability.
\end{example}

The bounded concealment assumption justifies lifted reasoning: we are allowed to assign high confidence to universally quantified formulas when no counterexamples are observed. 
The following ``discounting'' theorem of \citet{michael2010partial} makes this precise; although Michael's theorem was stated and proved for the special case of supervised learning with propositional variables as described above, the proof carries over.


\begin{theorem}\label{thm:discount} \cite[Theorem 2.2]{michael2010partial}  
    For any distribution $\mathcal P$ over models, masking process $\Theta$, and class $\mathcal{C}$ of formulas, if $\Theta$ is at most $(1 - \eta)$-concealing with respect to $\mathcal{C}$ and $\mathcal P$, and $\varphi \in \mathcal{C}$ is not $(1-\varepsilon)$-valid, then  
    \[
    \Pr_{\rho \sim \Theta(\mathcal P)}[\rho \models \neg \varphi ] \geq \eta \varepsilon.
    \]
    Conversely, if $\Theta$ is not at most $(1 - \eta)$-concealing with respect to $\mathcal{C}$ and $\mathcal P$, then there exists $\varphi \in \mathcal{C}$ such that  
    \begin{align*}
        \Pr_{\rho \sim \Theta(\mathcal P)}[\rho \models \neg \varphi ] \leq \eta \Pr_{M \sim \mathcal P}[M\models \neg \varphi ],
    \end{align*}
    where, note, $\varphi$ is $(1 - \Pr_{M \sim \mathcal P}[M\models \neg \varphi])$-valid.
\end{theorem}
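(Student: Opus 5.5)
The plan is to prove both directions of Theorem~\ref{thm:discount} by conditioning on whether the underlying full model falsifies $\varphi$, using only the definition of bounded concealment and the fact that the mask preserves consistency. The key preliminary observation is a soundness property of partial models. If $\rho=\theta(M)$ for a mask $\theta$, then $\rho$ is consistent with $M$. So if $\rho\models\neg\varphi$, which means $\rho$ exhibits a grounding $\vec a$ all of whose literals in $\varphi(\vec a)$ are observed and false, then $M\models\neg\varphi(\vec a)$, hence $M\models\neg\varphi$. In other words, the event $\{\rho\models\neg\varphi\}$ is contained in the event $\{M\models\neg\varphi\}$ on the joint space of $(M,\rho)$. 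I would state this containment first, since the converse direction relies on it. I would also note that the relevant events are measurable, taking $\Sigma_{\mathcal M}$ and the masking process to be such that these events are measurable. This is a standing assumption, implicit in the definition of concealment.

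For the forward direction, fix $\varphi\in\mathcal C$ that is not $(1-\varepsilon)$-valid, so $\Pr_{M\sim\mathcal P}[M\models\neg\varphi]>\varepsilon$. Then
\[
\Pr_{\rho\sim\Theta(\mathcal P)}[\rho\models\neg\varphi]\;\ge\;\Pr[\rho\models\neg\varphi \wedge M\models\neg\varphi]\;=\;\Pr[\rho\models\neg\varphi\mid M\models\neg\varphi]\cdot\Pr[M\models\neg\varphi]\;\ge\;\eta\varepsilon,
\]
where the last inequality uses $(1-\eta)$-concealment. This direction does not even need the containment.

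For the converse, suppose $\Theta$ is not at most $(1-\eta)$-concealing. Then some $\varphi\in\mathcal C$ satisfies $\Pr[\rho\models\neg\varphi\mid M\models\neg\varphi]<\eta$, and this conditional probability is defined, so $\Pr[M\models\neg\varphi]>0$. By the containment, $\Pr[\rho\models\neg\varphi]=\Pr[\rho\models\neg\varphi\wedge M\models\neg\varphi]$, which is the conditional probability times $\Pr[M\models\neg\varphi]$ and hence at most $\eta\Pr_{M\sim\mathcal P}[M\models\neg\varphi]$. The closing remark about validity is immediate from the definition of $(1-\varepsilon)$-validity with $\varepsilon=\Pr[M\models\neg\varphi]$.

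The main obstacle is making the containment rigorous in the first-order setting, that is, pinning down what $\rho\models\neg\varphi$ means for a partial model and a $\forall$-clause. I would define it as the existence of a ground instance whose literals are all determined false by $\rho$, and check that consistency then forces falsity in $M$. The remaining steps are routine conditioning arguments carried over from Michael's propositional proof.
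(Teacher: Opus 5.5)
Your proof is correct and is essentially the argument the paper intends. The paper gives no proof of its own; it only asserts that Michael's conditioning argument carries over. Your computation is exactly that carried-over argument: for the forward direction you lower-bound by the joint event, and for the converse you use the containment $\{\rho\models\neg\varphi\}\subseteq\{M\models\neg\varphi\}$ forced by mask consistency.

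The only property of $\rho\models\neg\varphi$ you actually use is that it implies $M\models\neg\varphi$ for every $M$ consistent with $\rho$. So the same proof covers arbitrary classes $\mathcal{C}$, not just single $\forall$-clauses, including the CNFs to which the paper later applies the theorem.
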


Therefore, a bounded concealment condition is necessary for learning universal clauses.
In particular, we note that it almost immediately gives a lower bound on the number of examples required for the more basic task of certifying $(1-\epsilon)$-validity (ignoring running time) for varying values of the concealment parameter $\eta$:
\begin{corollary}
    Suppose that some function $A$, given any formula $\varphi$ in the class of formulas $\mathcal{C}$ and $m(\varphi,\eta,\epsilon,\delta)$ partial examples from any distribution on masked examples $\Theta(\mathcal{P})$ for an at most $(1-\eta)$ concealing $\Theta$ with probability $1-\delta$:
    \begin{itemize}
        \item If $\varphi$ is perfectly valid with respect to $\mathcal{P}$, outputs ``accept''.
        \item If $\varphi$ is not $(1-\epsilon)$-valid with respect to $\mathcal{P}$, outputs ``reject''.
    \end{itemize}
    Suppose that for some distribution $\mathcal{P}^*$ and $(1-\eta)$-concealing masking process $\Theta^*$, $\Theta^*$ is not at most $(1-\eta-\gamma)$-concealing with respect to $\mathcal{C}$ and $\mathcal{P}^*$ for any $\gamma>0$. Then for all $\varphi$, $\delta$, and $\epsilon$ such that $\eta\epsilon \leq 1/2$,
    \[
    m(\varphi,\eta,\epsilon,\delta)\geq \frac{\ln(1/2\delta)}{2\eta\epsilon}.
    \]
\end{corollary}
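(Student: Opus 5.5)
We prove this lower bound by a two-point (indistinguishability) argument built on the converse half of Theorem~\ref{thm:discount}. The idea is to find two masked distributions that $A$ must answer differently but that are hard to tell apart from few samples: one where no violation is ever observed, and one where a violation is observed only with probability about $\eta\epsilon$.

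\textbf{Constructing the two distributions.} Fix $\varphi$, $\delta$ and $\epsilon$ with $\eta\epsilon\le 1/2$. The hypothesis on $\Theta^*$ says that the concealment parameter $\eta$ is essentially tight for $\mathcal{P}^*$. By the converse direction of Theorem~\ref{thm:discount}, applied with $\eta+\gamma$ for arbitrarily small $\gamma>0$, some formula is falsified in $\mathcal{P}^*$ but its violation is revealed in $\rho$ only with conditional probability close to $\eta$.

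From this we build a mixture $\mathcal{P}_1$:
\begin{itemize}
\item with probability $\epsilon$ (or slightly more), draw a model conditioned on violating $\varphi$, using the conditional law from $\mathcal{P}^*$;
\item otherwise, draw a model satisfying $\varphi$, for instance from the satisfying part of $\mathcal{P}^*$, or a fixed satisfying model.
\end{itemize}
Let $\mathcal{P}_0$ be the satisfying part alone, so that $\varphi$ is perfectly valid under $\mathcal{P}_0$.

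We pair each distribution with a masking process. Under $\mathcal{P}_1$ we use $\Theta^*$ on violating models. Under both $\mathcal{P}_0$ and $\mathcal{P}_1$ we use the same fixed mask on satisfying models, and it must be at most $(1-\eta)$-concealing. Then under $\Theta(\mathcal{P}_1)$ a sampled partial model reveals a violation with probability $p\le(\eta+\gamma)\epsilon'$, where $\epsilon'$ is just above $\epsilon$. Under $\Theta(\mathcal{P}_0)$ it never does.

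The key step is to arrange that, conditioned on no revealed violation, the partial models have the same law under both processes. One way is to make the concealed violating models masked identically to satisfying ones, so that their partial models are distributed like $\Theta(\mathcal{P}_0)$. Consistency is not an issue here, since a masked view is consistent with both kinds of model.

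\textbf{Bounding the probability of the correct answer.} Let $E$ be the event that none of the $m$ samples reveals a violation. On $E$, the sample distributions under the two processes coincide, so $A$ behaves identically, with the same output distribution, in both cases. Since $A$ must accept under $\mathcal{P}_0$ with probability at least $1-\delta$, its failure probability under $\mathcal{P}_1$ is at least
\[
\Pr[E]\,(1-\delta)\;\ge\;(1-p)^m(1-\delta).
\]
This must be at most $\delta$.

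Using $1-x\ge e^{-2x}$ for $x\le 1/2$, which is where the hypothesis $\eta\epsilon\le1/2$ enters, we get
\[
e^{-2pm}\,(1-\delta)\;\le\;\delta .
\]
For $\delta\le1/2$ this gives $e^{-2pm}\le 2\delta$, so $m\ge \ln(1/2\delta)/(2p)$. Letting $\gamma\to0$ and $\epsilon'\downarrow\epsilon$ yields the stated bound. If $\delta>1/2$, the right-hand side of the bound is negative and the claim is trivial.

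\textbf{Main obstacle.} The delicate step is the construction itself. The violation-revealing probability must be at most roughly $\eta\epsilon$, while the resulting masking process stays at most $(1-\eta)$-concealing with respect to the whole class $\mathcal{C}$ and not just $\varphi$. It must also genuinely come from the given $\Theta^*$ and $\mathcal{P}^*$, which is why the tightness hypothesis is stated for them.

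I would first try to take $\varphi$ to be the witness formula supplied by the converse of Theorem~\ref{thm:discount}. I would then check that mixing in extra satisfying mass only dilutes violations: it lowers unconditional probabilities but leaves the conditional probability $\Pr[\rho\models\neg\psi\mid M\models\neg\psi]$ unchanged for every $\psi\in\mathcal{C}$. That invariance is what keeps the mixture within the allowed family of at most $(1-\eta)$-concealing processes.
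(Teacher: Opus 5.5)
Your two-point template is the same as the paper's: a perfectly valid instance whose views look like the bad instance's views conditioned on no revealed violation, then $\Pr[E]\ge(1-p)^m$ with $p\le(\eta+\gamma)\epsilon$. Your closing computation is fine, and running $A$ at $\epsilon$ on an instance with violation rate $\epsilon'>\epsilon$ even avoids the paper's monotonicity-in-$\epsilon$ step.

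The gap is the construction, which you leave open and whose sketched version does not work. With $\Theta^*$ on the violating part and a separate fixed mask on the satisfying part, the unrevealing views of violating models have whatever law $\Theta^*$ gives them. You cannot re-mask them ``identically to satisfying ones'': a mask must output a partial model consistent with its input, and a view of a model of $\varphi$ generally reveals atoms on which a violating model disagrees. So your remark that ``a masked view is consistent with both kinds of model'' is exactly what is unproved. Re-masking the violating part would also forfeit the bound $p\le(\eta+\gamma)\epsilon'$ that you took from $\Theta^*$.

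The missing idea, which the paper uses, runs in the opposite direction. Keep $\Theta^*(\mathcal{P}^*)$ as the bad instance and take its unrevealing views $\rho\not\models\neg\varphi$. Complete each to a full model $M_\rho\models\varphi$ consistent with $\rho$; this is possible when $\rho\models\neg\varphi$ is read as `every completion of $\rho$ falsifies $\varphi$'. Let the good instance be the law of $M_\rho$, with the mask sending $M_\rho$ to $\rho$. The good view distribution then equals the bad one conditioned on no revealed violation by construction, and consistency is automatic.

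Second, $A$'s reject guarantee applies only to an instance that is at most $(1-\eta)$-concealing for all of $\mathcal{C}$, and your $\mathcal{P}_1$ need not be. The guarantee for $\Theta^*$ is an average over $\mathcal{P}^*$. Reweighting $\mathcal{P}^*|\neg\varphi$ against a satisfying part can push $\Pr[\rho\models\neg\psi\mid M\models\neg\psi]$ below $\eta$ for some other $\psi\in\mathcal{C}$. Your dilution invariance is false as stated: adding models of $\varphi$ that violate some $\psi\in\mathcal{C}$ turns that conditional probability into a weighted average with whatever rate the new part has. What is true is that a mixture of at-most-$(1-\eta)$-concealing instances is again at most $(1-\eta)$-concealing.

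The paper sidesteps this by invoking the reject guarantee on $\Theta^*(\mathcal{P}^*)$ itself, which is concealing by hypothesis. The paper, too, tacitly needs the completed good instance to be at most $(1-\eta)$-concealing and does not check it. Your instinct to rescale the violation probability to a prescribed $\epsilon$ does address something the paper glosses over, since it simply assumes the witness $\varphi$ has violation probability about $\epsilon$. The safe way to rescale is to mix $\Theta^*(\mathcal{P}^*)$ with an at-most-$(1-\eta)$-concealing component of models of $\varphi$, and to mix the good instance correspondingly. That can lower the violation probability but not raise it.
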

\begin{proof}
    We first observe that for any $\varphi\in\mathcal{C}$, there is a distribution $\tilde{\mathcal{P}}$ and masking process $\tilde{\Theta}$ such that
    \begin{itemize}
        \item $\varphi$ is perfectly valid on $\tilde{\mathcal{P}}$ and
        \item $\tilde{\Theta}(\tilde{\mathcal{P}})$ is the same distribution on partial examples $\rho$ as $\Theta^*(\mathcal{P}^*)$ conditioned on the partial example $\rho$ satisfying $\rho\not\models\neg\varphi$.
    \end{itemize}
    Now, by the first correctness condition, for any $\epsilon$ and $\delta$, when $A$ is given $m(\varphi,\eta,\epsilon,\delta)$ examples from $\tilde{\Theta}(\tilde{\mathcal{P}})$, since $\varphi$ is perfectly valid on $\tilde{\mathcal{P}}$, it must accept with probability $1-\delta$.

    Now, let $\gamma>0$ be given and $\varphi$ be a $1-\epsilon$-valid formula for $\Theta^*$ and $\mathcal{P}^*$ satisfying the second condition of Theorem \ref{thm:discount}, where $\Theta^*$ is $(1-\eta)$-concealing but not at most $(1-\eta-\gamma)$-concealing. Observe that in each example, the probability that $\rho\not\models\neg\varphi$ is at least $1-(\eta+\gamma)\epsilon$, and hence, the probability that $m(\varphi,\eta,\epsilon-\gamma,\delta)$ independent examples all satisfy $\rho\not\models\neg\varphi$ is at least
    $(1-(\eta+\gamma)\epsilon)^{m(\varphi,\eta,\epsilon-\gamma,\delta)}$. In this case, these are $m(\varphi,\eta,\epsilon-\gamma,\delta)$ examples drawn from $\Theta^*(\mathcal{P}^*)$ conditioned on $\rho\not\models\neg\varphi$, which is identical to $\tilde{\Theta}(\tilde{\mathcal{P}})$ by construction. So by a union bound, the probability that $A$ does not output accept on the examples from $\Theta^*(\mathcal{P}^*)$ is at most
    \[
    \delta+1-(1-(\eta+\gamma)\epsilon)^{m(\varphi,\eta,\epsilon-\gamma,\delta)}.
    \]
    But, by our choice of $\varphi$ which is not $1-\epsilon+\gamma$-valid with respect to $\mathcal{P}^*$, where $\Theta^*$ is $(1-\eta)$-concealing, the second correctness condition for $A$ requires that this probability is at least $1-\delta$. Since $1-x+x^2\geq e^{-x}$ for $x\geq 0$, and $x-x^2\geq x/2$ for $x\leq 1/2$ we have
    \begin{align*}
        1-\delta &\leq \delta+1-(1-(\eta+\gamma)\epsilon)^{m(\varphi,\eta,\epsilon-\gamma,\delta)})\\
        \Rightarrow 2\delta &\geq e^{-2(\eta+\gamma)\epsilon\cdot m(\varphi,\eta,\epsilon-\gamma,\delta)}\\
        m(\varphi,\eta,\epsilon-\gamma,\delta) &\geq\frac{\ln{1/2\delta}}{2(\eta+\gamma)\epsilon}.
    \end{align*}
    Note that without loss of generality, $m(\varphi,\eta,\epsilon,\delta)$ is monotonically decreasing with respect to $\epsilon$.
    Since $\gamma>0$ was arbitrary, letting $\gamma\to 0$ yields the claimed bound.
\end{proof}

\noindent
Indeed, as $\eta\to 0$, $m\to\infty$, recovering Observation \ref{obs:zero-observability}. More generally, if a method purports to certify $1-\epsilon$ validity with confidence $1-\delta$ from $m$ examples, then solving for $\eta$ in the bound, the method implicitly assumes that the masking is at most $(1-\eta)$-concealing for the corresponding value. We remark that our method will asymptotically match this inherent $\frac{\log 1/\delta}{\eta\epsilon}$ dependence on $\eta$, $\epsilon$, and $\delta$ in the number of examples required (and hence also running time). We remark that we are solving a slightly richer task than assumed in our statement here, which requires us to output a proof of a query from a given knowledge base and some $(1-\epsilon)$-valid formulas, which will indeed require some additional dependence on the complexity of those formulas in the number of examples used. But this is indeed a strictly richer formulation: in the special case where the knowledge base is empty, we can construct an algorithm that just outputs ``accept'' when a proof is found, solving the above task formulation. It is therefore bound by the same requirements.
\section{Query-Driven Algorithm}

We now present an algorithm for lifted query-driven reasoning under partial observability. Given a universally quantified clause $\varphi$, a symbolic knowledge base $\mathcal{K}$, and a set of partial models (i.e., masked observations) as data, the algorithm either returns a lifted resolution proof of $\varphi$ or soundly rejects it. The proof, if found, may use both symbolic axioms from $\mathcal{K}$ and new clauses validated statistically from the data.

The algorithm performs a space-bounded proof search, constructing lifted resolution trees with bounded memory. It leverages the bounded concealment assumption to learn new axioms from data and interleaves them with symbolic reasoning to justify universal claims efficiently.

The search proceeds in three phases:
\begin{itemize}
  \item \textbf{Validation}: If no observed partial model falsifies any grounding of $\varphi$, we accept $\varphi$ as PAC-valid and return a proof asserting it.
  \item \textbf{Weakening}: If $\varphi$ is subsumed by a clause in $\mathcal{K}$ via variable renaming, we derive it using weakening.
  \item \textbf{Recursive Resolution}: Otherwise, we branch on pivot literals drawn from $\mathcal{K}$ to recursively construct a lifted proof of $\varphi$, bounded by a space parameter $s$.
\end{itemize}

To make this procedure precise, we now define two core concepts used in the resolution step: \emph{subsumptive weakening} and \emph{pivot literals}.

\begin{definition}[Subsumptive Weakening]
Let $\alpha(y_1,\dots,y_\ell)$ be a lifted clause in $\mathcal{K}$, and let
$\varphi(x_1,\dots,x_k)$ be the lifted clause query. We say \emph{$\alpha$ subsumes $\varphi$ by weakening} if there exists a (not necessarily injective) renaming
\[
  \sigma:\{y_1,\dots,y_\ell\}\;\longrightarrow\;\{x_1,\dots,x_k\}
\]
such that for each literal $L$ of $\alpha$, the renamed literal
\[
  L\bigl(\sigma{(y_1)},\dots,\sigma{(y_\ell)}\big)
\]
appears among the disjuncts of $\varphi$.  
\end{definition}

\begin{example} Let $\alpha(x,y) = A(x) \vee B(y)$ and let $\varphi(x) = A(x) \vee B(x) \vee C(x)$. Then $\alpha$ subsumes $\varphi$ by weakening via $\sigma(x) = \sigma(y) = x$. This illustrates how non-injective substitutions allow us to derive special cases from more general rules in the knowledge base, enabling clause reuse even when variable bindings collapse.
\end{example}

\begin{definition}[Pivot Literals for \(\varphi\)]
Let \(\varphi(x_1,\dots,x_k)\) be the query with \(k\) universally quantified variables.  For any literal \(L(\vec{y})\) containing universally quantified variables occurring in \(\mathcal{K}\), each literal of the form $L({\sigma(\vec{y})})$ for some renaming $
  \sigma:\{y_1,\dots,y_\ell\}\;\longrightarrow\;\{x_1,\dots,x_k\}
$
is called a \emph{pivot literal over} \(\varphi\).  
\end{definition}

\begin{example} Suppose $\mathcal{K}$ contains the clause $\forall z. A(z)$, and the query is $\varphi(x, y) = B(x) \vee C(y)$. Then both $A(x)$ and $A(y)$ are considered pivot literals for $\varphi$, corresponding to renamings $\sigma(z) = x$ and $\sigma(z) = y$, respectively. Even though they originate from the same clause in $\mathcal{K}$, they are treated as distinct literals because $x$ and $y$ are different variables in the scope of $\varphi$.
\end{example}

\begin{algorithm}[t]
\caption{\textsc{LiftedLearn+Search}}
\label{alg:lifted_learn_search}
\KwIn{
  Knowledge base $\mathcal{K}$, space bound $s \ge 1$, query clause $\varphi(x_1,\dots,x_k)$,\\
  Partial models $\rho^{(1)}, \dots, \rho^{(m)}$
}
\KwOut{Lifted resolution proof of $\varphi$ or \texttt{None}}

\If{$\forall i,\; \rho^{(i)} \not\models \neg\varphi$}{
  \Return proof asserting $\varphi$ from statistical validation
}

\ForEach{clause $\alpha(\vec{y}) \in \mathcal{K}$}{
  \If{there exists substitution $\sigma : \{y_j\} \rightarrow \{x_i\}$ s.t.\ $\sigma(\alpha) \subseteq \varphi$}{
    \Return weakening derivation of $\varphi$ from $\sigma(\alpha)$
  }
}

\If{$s > 1$}{
  \ForEach{pivot or ground literal $\ell$ relevant to $\varphi$ from $\mathcal{K}$}{
    \If{$\ell \not\in \varphi$ and $\neg \ell \not\in \varphi$}{
      $\Pi^+ \leftarrow$ \textsc{LiftedLearn+Search}$(\mathcal{K},\; s{-}1,\; \varphi \vee \ell,\; \{ \rho^{(i)} : \ell|_{\rho^{(i)}} \neq 1 \})$ \\
      \If{$\Pi^+ \neq$ \texttt{None}}{
        $\Pi^- \leftarrow$ \textsc{LiftedLearn+Search}$(\mathcal{K},\; s,\; \varphi \vee \neg \ell,\; \{ \rho^{(i)} : \ell|_{\rho^{(i)}} \neq 0 \})$ \\
        \If{$\Pi^- \neq$ \texttt{None}}{
          \Return resolution proof combining $\Pi^+$ and $\Pi^-$ on pivot $\ell$
        }
      }
    }
  }
}

\Return \texttt{None}
\end{algorithm}

We now demonstrate that using subsumptive weakening and pivot literals is sufficient to derive a sound and complete algorithm for query validation. We formalize this procedure in the \textsc{LiftedLearn+Search} algorithm.

\begin{theorem}
Let $\varphi$ be a $\forall$-clause query, and $\mathcal{K}$ be a knowledge base consisting of a finite set of $\forall$-clauses.  $N$ is the number of different atoms appeared in the knowledge base. Let $\mathcal{P}$ be a distribution over models, and let $\Theta$ be a $(1 - \eta)$-concealing masking process with respect to CNFs up to size $2(eN/s-1)^{s-1}$ for the distribution $\mathcal{P}$.
Then on a sample of size $m \in \Theta\left(\frac{N^{s-1} \log N + \log \frac{1}{\delta}}{\eta\varepsilon}\right)$, and with running time $O\left(\frac{N^{2(s-1)}|\varphi|}{\eta\varepsilon}(N^{s-1}\log N + \log \frac{1}{\delta})\right)$ the algorithm with fixed parameter $s$ satisfies:

\begin{itemize}
 \item (\textbf{soundness})  If $[\mathcal{K} \Rightarrow \varphi]$ is not $(1-\varepsilon)$-valid,  with probability $1-\delta$, Algorithm~\ref{alg:lifted_learn_search} returns None.
 
    \item (\textbf{completeness}) If 
    there exists some perfectly valid CNF $\mathcal{H}$ for which there is a space-$s$ treelike resolution proof of $\varphi$ from $\mathcal{K} \wedge \mathcal H$, then Algorithm~\ref{alg:lifted_learn_search}  returns a proof of $\varphi$ from $\mathcal{K} \wedge \mathcal{H}_0$ for some CNF $\mathcal{H}_0$ of size $O(N^{s-1})$ that is $(1-\varepsilon)$-valid with respect to $\mathcal D$ with probability $1-\delta$. 
   
\end{itemize}

\end{theorem}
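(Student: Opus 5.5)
We follow the propositional analysis of \citet{juba2019query} and lift it to $\forall$-clauses using Lemma~\ref{lem:bound}, Theorem~\ref{thm:discount} and Proposition~\ref{classical-inf-bound}.

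\textbf{Structure of the output.} First we observe that every proof returned by Algorithm~\ref{alg:lifted_learn_search} is a normal space-$s$ treelike lifted resolution proof. Its leaves are of three kinds: clauses of $\mathcal{K}$ obtained by subsumptive weakening, and clauses asserted by ``statistical validation''. Each validated leaf has the form $\varphi\vee\ell_1\vee\cdots\vee\ell_j$, where the $\ell_i$ are the pivot or ground literals on the current branch. We let $\mathcal{H}_0$ be the conjunction of these validated leaves. By part~1 of Lemma~\ref{lem:bound}, the proof has at most $2(eN/(s-1))^{s-1}$ nodes, so $\mathcal{H}_0$ is a CNF of size $O(N^{s-1})$.

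\textbf{Restricted samples.} We check the invariant that the sample passed down a branch is exactly the set of $\rho^{(i)}$ that do not make any branch literal true. Hence ``no $\rho^{(i)}$ in the restricted sample falsifies $\varphi\vee\ell_1\vee\cdots$'' is equivalent to ``no $\rho^{(i)}$ in the full sample falsifies that leaf clause''. Under this invariant, $\mathcal{H}_0$ is a CNF none of whose clauses is falsified by any of the $m$ original partial models.

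\textbf{Soundness.} We take a union bound over all candidate CNFs $\mathcal{H}_0$ that could arise. They are determined by the proof skeleton together with its labels, and part~2 of Lemma~\ref{lem:bound}, counting pivots among the $N$ lifted atoms and renamings, bounds their number by roughly $(8N)^{(eN/(s-1))^{s-1}}$; its logarithm is $O(N^{s-1}\log N)$. For any fixed candidate $\mathcal{H}_0$ that is not $(1-\varepsilon)$-valid, bounded concealment with respect to CNFs of that size together with Theorem~\ref{thm:discount} gives $\Pr_\rho[\rho\models\neg\mathcal{H}_0]\ge\eta\varepsilon$. So all $m$ samples miss a witness with probability at most $(1-\eta\varepsilon)^m\le e^{-\eta\varepsilon m}$. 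Taking $m=\Theta((N^{s-1}\log N+\log(1/\delta))/(\eta\varepsilon))$ makes the union bound at most $\delta$. Thus, with probability $1-\delta$, any returned $\mathcal{H}_0$ is $(1-\varepsilon)$-valid. Since $\mathcal{K}\wedge\mathcal{H}_0\models\varphi$, Proposition~\ref{classical-inf-bound}, applied relative to $\mathcal{K}$ (i.e.\ to $[\mathcal{K}\Rightarrow\varphi]$), gives that $[\mathcal{K}\Rightarrow\varphi]$ is $(1-\varepsilon)$-valid. Contrapositively, if it is not, the algorithm returns None.

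\textbf{Completeness.} Fix a space-$s$ treelike proof of $\varphi$ from $\mathcal{K}\wedge\mathcal{H}$ and normalize it so that its cut literals are pivot or ground literals over $\varphi$ drawn from $\mathcal{K}$. Clauses of $\mathcal{H}$ not involving $\mathcal{K}$-atoms can be absorbed into validation leaves. We then induct on $s$ and the proof structure, exactly as in the DPLL/space analysis of \citet{Kullmann1999} and \citet{esteban2001space}. The smaller-rank subtree is matched by the recursive call with $s-1$, and the other subtree by the call with $s$. Every leaf clause of $\mathcal{H}$ is perfectly valid, so it is never falsified by any $\rho$. Therefore a clause it subsumes passes the validation test on the restricted sample. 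The search tries every pivot, so it will find this branch.

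\textbf{Running time.} The recursion explores at most $O(N^{s-1})$ nodes per successful branch, but the enumeration over pivots at each level multiplies this, giving $O(N^{2(s-1)})$ calls. Each call does an $O(m|\varphi|)$ scan of the samples.

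\textbf{Main obstacle.} The hardest step will be the counting for the union bound in the lifted setting: controlling how many distinct renamings and pivot literals there are, and showing that they are captured by ``$N$ atoms''. The key detail is arguing that variables are restricted to those of $\varphi$, so that the count stays $N^{O(s)}$. A second delicate point is the normalization step in the completeness argument, which must show that lifted cuts on variables outside $\varphi$ can be avoided.
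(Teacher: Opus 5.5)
Your proposal follows essentially the same route as the paper. Soundness goes through the size bound of Lemma~\ref{lem:bound}, the discounting Theorem~\ref{thm:discount} under bounded concealment for CNFs, and a union bound over candidate learned CNFs $\mathcal{H}_0$; you make the count via part~2 of Lemma~\ref{lem:bound} explicit, where the paper only gestures at it. Completeness goes through restricting cuts to renamed pivot literals from $\mathcal{K}$ and simulating the space-$s$ DPLL-style search, using that perfectly valid clauses are almost surely never falsified.

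The one step you leave as a bare assertion is the normalization to $\mathcal{K}$-pivots over the variables of $\varphi$. The paper also argues this only informally: it collapses cuts whose branches are supported by data alone, argues that a literal absent from $\mathcal{K}$ cannot help a branch needing a $\mathcal{K}$-weakening, and renames fresh variables into the scope of $\varphi$ by closure under substitution. So your proposal matches the paper in both structure and level of rigor.
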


\begin{proof}

\medskip\noindent\textbf{(1) Soundness.}  By definition, Algorithm 1 produces a space-\(s\) normal treelike resolution proof. Let \(\mathcal{H}_{\text{bad}}\) be the CNF formed by the conjunction of all such leaf clauses in the proof. By Lemma~\ref{lem:bound} (Part 1), this CNF contains at most \(2(eN/s-1)^{s-1}\) clauses, hence bounded concealment applies.
Therefore, Theorem~\ref{thm:discount} guarantees that $\Theta$ will falsify this CNF with probability at least \(\eta \varepsilon\) if the CNF is not $1-\varepsilon$ valid.

We have $m$ draws of partial models in total.
Using the inequality \(1 - x \le e^{-x}\) for \(x \in (0,1)\), we obtain:
\[
(1 - \eta \varepsilon)^m \le e^{-m \eta \varepsilon}.
\]

Now choose
\[
m = \frac{N^{s-1} \log N + \log(1/\delta)}{\eta \varepsilon}.
\]
 Then:
\begin{align*}
e^{-m \eta \varepsilon}
&= e^{- (N^{s-1} \log N + \log(1/\delta))} \\
&= e^{- N^{s-1} \log N} \cdot e^{- \log(1/\delta)} \\
&= N^{- N^{s-1}} \cdot \delta.
\end{align*}

Hence $
\Pr[\mathcal{H}_{\text{bad}} \text{ is consistent with all samples}] \le \delta \cdot N^{-\Omega(N^{s-1})}.
$

Since weakening is not used in the clauses that are being asserted during learning, we do not need to bound the total number of distinct proofs. Applying a union bound over all such CNFs, the probability that \emph{any} non-\((1 - \varepsilon)\)-valid  CNF is consistent with the entire sample is at most \(\delta\). Hence, with probability at least \(1 - \delta\), the proof it returns has all the leaves at least $1-\varepsilon$ valid. But if the query is itself less than $1-\varepsilon$ valid, then such proof cannot exist and 
the algorithm returns \texttt{None}, establishing the soundness claim.

\medskip\noindent\textbf{(2) Completeness.} It remains to show that Algorithm~\ref{alg:lifted_learn_search} will indeed find a proof when a 1-valid space-$s$ treelike proof exists. Consider the (potentially exponential-size) formula $\widetilde{H}$ consisting of all clauses that are consistent with the obscured scenes $\rho^{(1)}, \ldots, \rho^{(m)}$. Note that weakening is not necessary for clauses from $\widetilde{H}$, as any clause derivable by weakening is already explicitly present in $\widetilde{H}$.

Algorithm~\ref{alg:lifted_learn_search} performs a systematic search for a normal space-$s$ treelike resolution proof, branching on literals that appear in $\mathcal{K}$. We claim that for any original 1-valid proof, there exists an equivalent one (possibly with new asserted clauses) that applies cuts only to pivot literals drawn from ground instances of $\mathcal{K}$.

If such a literal enables both branches of a resolution step to be verified using the data, then the resolved clause can itself be verified directly, without the need to branch on that literal. On the other hand, if at least one of the branches cannot be verified (i.e., requires weakening), then introducing a literal not in $\mathcal{K}$ is not helpful, because the resulting clause would not be derivable from $\mathcal{K} \land \widetilde{H}$ and would not be verified by the data.

Moreover, we do not need to introduce new variables when branching on clauses from $\mathcal{K}$. Although such clauses may contain universally quantified variables, renaming them to match variables already in the query’s scope merely produces a special case of the original clause. Since universally quantified statements are closed under substitution, this specialization remains entailed by the knowledge base and does not require augmenting the query with fresh variables.

Therefore, even if the original 1-valid proof uses literals outside the hypothesis, the algorithm can always find an equivalent proof that uses only literals from $\mathcal{K}$. This justifies restricting branching literals to those in the knowledge base, without loss of completeness in the class of verifiable proofs.

Furthermore, the standard analysis of space-bounded treelike resolution (e.g.,~\citet{Kullmann1999}) guarantees that the algorithm runs in time $O(m|\varphi|N^{2(s-1)})$ on a sample of size $m$. When a 1-valid space-$s$ treelike proof exists, it is consistent with all samples with probability 1.

Thus, whenever Algorithm~\ref{alg:lifted_learn_search} terminates early, it returns a proof that is consistent with the examples, and by soundness, the proof is $(1 - \varepsilon)$-valid. Moreover, if the algorithm explores the literals used in the known 1-valid proof (though it may succeed even earlier), it is guaranteed to find a valid derivation. Hence, the algorithm is complete for the class of 1-valid space-$s$ treelike proofs.
\end{proof}

\begin{example}
Consider the query:
\[
\varphi = \forall x \; \neg \texttt{Squirrel}(x) \lor \texttt{Grey}(x)
\]
and the knowledge base:
\begin{align*}
\mathcal{K} = \{ 
& \forall x \; \neg \texttt{Squirrel}(x) \lor \texttt{TreeDweller}(x), \\
& \forall x \; \neg \texttt{Squirrel}(x) \lor \texttt{NutEater}(x) \}
\end{align*}
which states that all squirrels live in trees and eat nuts.

Suppose we observe partial scenes where all tree-dwellers are grey. This validates:
\[
\forall x \; \neg \texttt{TreeDweller}(x) \lor \texttt{Grey}(x)
\]
from data, but falsifies $\neg \texttt{NutEater}(x) \lor \texttt{Grey}(x)$.

When the algorithm selects \texttt{TreeDweller}$(x)$ as a pivot literal, resolution succeeds using one clause from $\mathcal{K}$ and one statistically validated clause, constructing a lifted proof of $\varphi$, as illustrated in Figure \ref{fig:resolution-tree}.
\end{example}

\begin{figure}[t]
\centering
\begin{tikzpicture}[
  level distance=1.4cm,
  sibling distance=4.2cm,  
  every node/.style = {font=\scriptsize},
  edge from parent/.style={draw, -latex},
  treenode/.style = {align=center, inner sep=1pt}
  ]

\node [treenode] (root) {$\neg\texttt{Squirrel}(x) \vee \texttt{Grey}(x)$ \\ ($\varphi$)}
  child { node [treenode] (left) {$\neg\texttt{Squirrel}(x) \vee \texttt{TreeDweller}(x)$ \\ ($\mathcal{K}$)} 
    edge from parent node[left,xshift=-1mm,yshift=2mm] {}
  }
child { node [treenode, align=center] (right)
  {$\neg\texttt{TreeDweller}(x) \vee \texttt{Grey}(x)$\\($\data$)}
};

\end{tikzpicture}
\caption{Lifted resolution proof of $\neg\texttt{Squirrel}(x) \vee \texttt{Grey}(x)$ using both knowledge base and data.}
\label{fig:resolution-tree}
\vspace{-1.7em}
\end{figure}

\subsection{Choosing the Space Bound}

Our algorithm requires the proof's space bound $s$ as input, which controls the tradeoff between the expressiveness of the proof system and the computational resources required for proof search. Larger values of \( s \) permit deeper and more complex derivations, potentially increasing completeness, but also lead to exponential growth in the number of candidate proofs (see Lemma~\ref{lem:bound}). We note that  space-bounded treelike resolution has another interpretation as an extension of the well-known \emph{unit-propagation} inference rule (where $s=2$) with a rule known as \emph{failed-literal} \citep{ansotegui2008measuring}: we temporarily assert a literal $\ell$ as a hypothesis, and if the formula can be refuted, we infer $\neg\ell$. Then \( s\) corresponds to a bound on the number of literals that may be simultaneously asserted while invoking the failed-literal rule; algorithms for automating space-bounded treelike resolution essentially follow the strategy implicitly described by this characterization, searching for literals on which to invoke the failed-literal rule.

In practice, \( s \) should be chosen based on available memory and computational budget, as well as the anticipated complexity of the domain. \citet{ansotegui2008measuring} also examined the value of \( s\) in many industrial instances of propositional satisfiability, and found that it is significantly smaller than for random instances in the hard regime at the phase transition, suggesting that it may capture what makes real-world instances easy.

Moreover, our algorithm is anytime: if no proof is found for a given \( s \), the parameter can be incrementally increased, enabling flexible exploration of the proof space.

\section{Related Work}
Our work targets an integration of symbolic reasoning with statistical learning—two paradigms that have largely developed in parallel. The closest line to ours begins with \citet{de1993theory}, who introduced a theory of clausal discovery divorced from any underlying data distribution. Soon afterwards, \citet{khardon1997learning} formalized the “learning to reason” framework, establishing a principled synthesis of learning and proof, but under a strong restriction: the learner reasons in a fully observed setting. Subsequently, \citet{valiant1999robust} proposed PAC-Semantics, relaxing classical validity to accommodate imperfect rules as long as they hold with high probability under the data distribution. Building on these foundations, \citet{Juba2013Implicit} introduced \textit{implicit learning to reason}, which concentrates computational effort on what is needed for a particular query, yielding large savings in time and space. Later work extended this perspective toward limited forms of first-order reasoning \citep{belle2019implicitly}, as discussed in the Introduction. More recently, \citet{juba2019query} developed \textit{query-driven learning to reason}, strengthening the implicit approach by explicitly identifying the supporting proof structure for a given query. This explicit proof output is especially important in settings that require interpretability, human oversight, or supervised refinement. Nevertheless, this work was  primarily developed in the propositional setting, and does not address our challenge in lifted proofs for universally quantified queries under partial observability.

Outside the ``learning to reason" line, several other research programs are conceptually related but do not focus on efficiency and soundness or completeness guarantees as we do. We give an overall review as follows: 

\paragraph{Statistical Relational Learning (SRL).}

SRL frameworks build probabilistic models over relational domains in order to support uncertainty-aware inference. Representative formalisms include Bayesian Logic Programs~\citep{kersting20071}, Markov Logic Networks~\citep{richardson2006markov}, and probabilistic logic programming~\citep{de2007problog},  which traditionally compiled to or were parameterized by graphical models, and more recent approaches have used a Weighted First-Order Model Counting (WFOMC) formulation for inference \citep{braz2005lifted,van2011lifted,gogate2016probabilistic}. While \emph{lifted probabilistic inference}~\citep{poole2003first,kersting2012lifted,van2021introduction} exploits symmetries to improve scalability at inference time, two core challenges remain during learning in particular. First, structure learning is  computationally difficult: for expressive SRL formalisms, it is typically combinatorial and intractable \citep{chickering1996learning}, with practical methods  relying on restrictive hypothesis classes and heuristic search~\citep{kok2005learning}.  Second, handling missing data is commonly approached via maximum-likelihood or MAP objectives with latent variables, for which expectation--maximization (EM)~\citep{meert2008learning} and related iterative procedures are standard. But these methods generally offer only weak guarantees: they need not find a global optimum, and even convergence to a stationary point can be slow or sensitive to initialization \citep[Section 19.4]{koller2009probabilistic}. As a result, the end-to-end pipeline is hard to scale efficiently, and its outputs are not accompanied by proof-style certificates or optimality guarantees.


\paragraph{Inductive Logic Programming (ILP).}
ILP systems~\citep{muggleton1994inductive} aim to induce logical rules from examples consistent with background knowledge, without explicitly modeling uncertainty. While classical systems like Aleph~\citep{srinivasan2001aleph} and Progol~\citep{muggleton1995inverse} are effective at rule generalization, they are not designed for validation or query answering with a distribution in mind.  The most closely related ILP work is that of \citet{DERAEDT1994375},
which learns small first-order clauses with PAC learning guarantees relative to a
finite model, rather than the infinite model we consider, where such guarantees are
possible only under bounded-concealment assumptions. Additionally, extensions to probabilistic ILP~\citep{de2008probabilistic} inherit the limitations of SRL, including the challenges with structure learning and handling incomplete observations.

\paragraph{Neuro-symbolic approaches.}
Neuro-symbolic (N-S.) methods combine neural architectures with symbolic logic, often by embedding predicates into continuous representations or approximating inference steps. Notable examples include DeepStochLog~\citep{winters2022deepstochlog}, Lifted Relational Neural Networks~\citep{sourek2018lifted}, and Neural Logic Machines~\citep{dong2019neural}. Unlike our approach, these systems enable learning from weak supervision but do not produce interpretable proofs or offer formal correctness guarantees. They also do not address the verification of universally quantified queries or model partial observability with principled assumptions. 

\section{Conclusion and Future Work}

In this work, we introduce the first query-driven learning to reason framework that provides simultaneous efficiency and correctness guarantees at the lifted level. Our approach bridges symbolic reasoning and statistical learning by enabling universally quantified clauses to be validated from partial observations.

Several directions remain open.  
First, we restrict attention to Boolean domains,  which aligns naturally with resolution but leaves open extensions to richer lifted settings, such as lifted real linear arithmetic \citep{kersting2017relational} (or quadratic \citep{mladenov2017lifted}) or probabilistic relational logics \citep{Ge2025Poly}, where the expressions involve lifted families of real-valued variables. 
In a closely related vein, our work only considers a simple kind of masking where attributes are simply present or missing. If we consider the extension to real-valued expressions, then we may correspondingly consider other forms of uncertainty in our partial examples, such as intervals, along the lines of \citet{rader2021learning} who considered implicit learning in the propositional setting. It would be interesting to extend query-driven explicit learning to handle such partial information models.

Second, we only guarantee completeness when there is a \emph{perfectly} valid knowledge base that entails the query -- even one counterexample rules out a prospective clause from inclusion in our learned knowledge -- which raises the question of how we can handle noisy or imperfect data. While one can relax our algorithm by allowing a small fraction of falsifying examples before rejecting clauses, in general this only yields a bound on the soundness of the query that scales with the number of learned clauses (using Proposition \ref{classical-inf-bound}); can we obtain a tighter guarantee?

Finally, it would be fruitful to develop  practical applications.  
Promising application domains include knowledge extraction from structured or semi-structured sources, as well as planning tasks where lifted reasoning about actions and goals can benefit from integrating learned rules and partial observations.



\bibliographystyle{kr}
\bibliography{kr-sample}

\end{document}